\documentclass{llncs}
\usepackage{amssymb}
\usepackage{epsfig}
\usepackage{amsmath}





\newcommand{\nodes}{{\mathcal N}}
\newcommand{\edges}{{\mathcal E}}
\newcommand{\HG}{{\cal H}}
\newcommand{\HD}{H\!D}
\newcommand{\JT}{J\!T}

\newcommand{\NP}{\mbox{\rm NP}}

\newcommand{\LCFL}{\mbox{\rm LOGCFL}}

\newcommand{\DL}{{\rm L}}

\newcommand{\SL}{\mbox{\rm SL}}


\newcommand{\C}{\mathcal{C}}

\newcommand{\tuple}[1]{\langle#1\rangle}
\newcommand{\nop}[1]{}

\newcommand{\longv}[1]{}


\newcommand{\WCSP}{{\mbox{CSOP}}}
\newcommand{\WCSPP}{{\mbox{WCSP}}}
\newcommand{\VCSP}{{\mbox{Max-CSP}}}
\renewcommand{\P}{\mathcal{I}}
\newcommand{\I}{\mathcal{I}}
\newcommand{\primal}{{\it G}}
\newcommand{\incidence}{{\it inc}}
\newcommand{\w}{{w}}

\renewcommand{\th}{{t}}
\newcommand{\conforms}{\approx\ }

\newcommand{\rel}{r}

\raggedbottom \sloppy

\pagestyle{plain}
\begin{document}

\title{Tractable Optimization Problems through\\ Hypergraph-Based
  Structural Restrictions\thanks{G.Gottlob works at the Computing
    Laboratory and at the Oxford Man Institute of Quantitative
    Finance, Oxford University.  This work was done in the context of
    the EPSRC grant EP/G055114/1 ``Constraint Satisfaction for
    Configuration: Logical Fundamentals,Algorithms, and
    Complexity'' and of Gottlob's   Royal Society
    Wolfson Research Merit Award.}
  }

\author{Georg Gottlob\inst{1}\and Gianluigi Greco\inst{2}\and Francesco Scarcello\inst{2}}

\institute{
  Oxford University\inst{1},
  University of Calabria\inst{2}\\
  {\tt georg.gottlob@comlab.ox.ac.uk}, {\tt ggreco@mat.unical.it}, {\tt scarcello@deis.unical.it}\\
}

\maketitle

\begin{abstract}
Several variants of the Constraint Satisfaction Problem have been proposed and investigated in the literature for modelling those scenarios
where solutions are associated with some given costs. Within these frameworks  computing an optimal solution is an $\NP$-hard problem in
general; yet, when restricted over classes of instances whose constraint interactions can be modelled via (nearly-)acyclic graphs, this problem
is known to be solvable in polynomial time.
In this paper, larger classes of tractable instances are singled out, by discussing solution approaches based on exploiting hypergraph
acyclicity and, more generally, structural decomposition methods, such as (hyper)tree decompositions.
\end{abstract}


\section{Introduction}\label{sec:introduction}

The Constraint Satisfaction Problem (CSP) is a well-known framework \cite{dech-03} for modelling and solving search problems, which received
considerably attention in the literature due to its applicability in various areas.
Informally, a CSP instance is defined by singling out the variables of interest, and by listing the allowed combinations of values for groups
of them, according to the constraints arising in the application at hand. The solutions for this instance are the assignments of domain values
to variables that satisfy all such constraints. Many apparently unrelated problems from disparate areas actually turn out to be equivalent to
the CSP and can be accommodated within the CSP framework. Examples are puzzles, conjunctive queries over relational databases, graph
colorability,
and checking whether there is a homomorphism between two finite structures.

\begin{example}\label{ex:cross}
Figure~\ref{fig:crossword} shows a combinatorial crossword puzzle (taken from \cite{gott-etal-00}). A set of legal words is associated with each
horizontal or vertical array of white boxes delimited by black boxes. A solution to the puzzle is an assignment of a letter to each white box
such that to each white array is assigned a word from its set of legal words.
This problem can be recast in a CSP by associating a variable with each white box, and by defining a constraint for each array of white boxes
prescribing the legal words that are associated with it.\hfill $\lhd$
\end{example}

When assignments are associated with some given cost, however, computing an arbitrary solution might not be enough. For instance, the
crossword puzzle in Figure~\ref{fig:crossword} may admit more than one solution, and expert solvers may be asked to single out the most difficult ones, such as those solutions that
minimize the total number of vowels occurring in the used words.
In these cases, one is usually interested in the corresponding \emph{optimization problem} of computing the solution of minimum cost, whose
modeling is accounted for in several variants of the basic CSP framework, such as fuzzy, probabilistic, weighted, lexicographic, valued, and
semiring-based CSPs (see \cite{SoftConstraint,CSPext} and the references therein).

\begin{figure*}[t]
  \centering
  \includegraphics[width=0.99\textwidth]{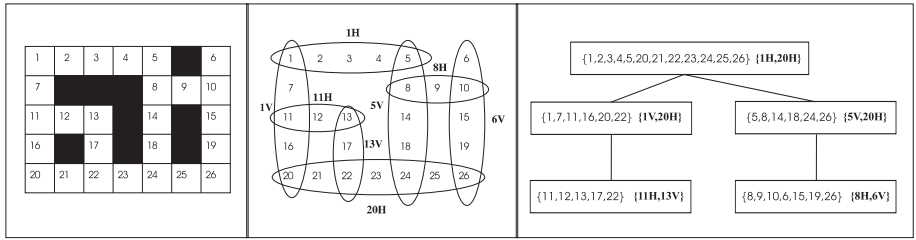}\vspace{-2mm}
  \caption{A crossword puzzle, its associated hypergraph $\HG_{cp}$, and a hypertree decomposition of width 2 for $\HG_{cp}$.}\label{fig:crossword}\vspace{-3mm}
\end{figure*}

Since solving CSPs---and the above extensions---is an $\NP$-hard problem, much research has been spent to identify restricted classes over
which solutions can efficiently be computed. In this paper, {\em structural decomposition methods} are considered \cite{gott-etal-00}, which
identify tractable classes by exploiting the structure of constraint scopes as it can be formalized either as a hypergraph (whose nodes
correspond to the variables and where each group of variables occurring in some constraint induce a hyperedge) or as a suitable binary encoding
of such hypergraph. In particular, we focus on the structural methods based on the notions of (generalized) hypertree
width~\cite{gott-etal-99,gott-etal-03} and treewidth~\cite{RS84}. In both cases, the underlying idea is that solutions to CSP instances that
are associated with acyclic (or nearly-acyclic) structures can efficiently be computed via dynamic programming, by incrementally processing the
structure according to some of its topological orderings.

As a matter of fact, however, while in the case of classical CSPs deep and useful results have been achieved for both graph and hypergraph
representations, in the case of CSP extensions tailored for optimization problems attention was mainly focused on binary encodings and, in
particular, on the \emph{primal graph} representation, where nodes correspond to variables and an edge between two variables indicates that
they are related by some constraint. Discussing whether (and how) hypergraph-based structural decomposition techniques in the literature can be
lifted to such optimization frameworks is the main goal of this paper.
In particular, we consider three CSP extensions:

\begin{description}
  \item[(1)] First, we consider optimization problems where every mapping variable-value is associated with a cost, so
      that the aim is to find an assignment satisfying all the constraints and having the minimum total cost.

  \item[(2)] Second, we consider the case where costs are associated with the allowed combinations of
      simultaneous values for the variables occurring in the constraint, rather than to individual values. Again, within this setting,
      we consider the problem of computing a solution having minimum total cost.

  \item[(3)] Finally, we consider a scenario where the CSP instance at hand might not admit a solution at all, and where the problem is
      hence to find the assignment minimizing the total number of violated constraints (and, more generally, whenever a cost is assigned to
      each constraint, the assignment minimizing the total cost of violated constraints).
\end{description}

\vspace{-2mm}For each of the above settings, the complexity of computing the optimal solution is analyzed in this paper, by overviewing some relevant recent
research and by providing novel results. In particular:

\vspace{-2mm}\begin{itemize}
  \item[$\blacktriangleright$] We show that optimization problems of kind \textbf{(1)} can be solved in polynomial time on instances
      having bounded (generalized) hypertree-width hypergraphs. This result is based on an algorithm recently designed and analyzed in the
      context of \emph{combinatorial auctions}~\cite{GG07}.

  \item[$\blacktriangleright$] We show that even optimization problems of kind \textbf{(2)} are tractable on instances having bounded (generalized) hypertree-width hypergraphs.
       Indeed, we describe how to transform this kind of instances into equivalent instances of kind \textbf{(1)}, by preserving their structural properties.

  \item[$\blacktriangleright$] We observe that optimization problems of kind \textbf{(3)} remain $\NP$-hard even over instances having an
      associated acyclic hypergraph. However, there is also good news:
      they are shown to be tractable on instances having bounded
      treewidth {\em incidence graph} encoding.
       The latter is a binary encoding of the constraint hypergraph with usually better structural features than the primal graph encoding (see, e.g.,
      \cite{gott-etal-00,grec-scar-03}). Again, proof is via a mapping to case \textbf{(1)}.
\end{itemize}

\vspace{-2mm}\noindent \textbf{Organization.} The rest of the paper is organized as follows. Section~\ref{sec:preliminaries} discusses preliminaries on CSPs
and structural restrictions, and Section~\ref{sec:methods} provides an overview of the structural decomposition methods based on treewidth and
(generalized) hypertree width. Results for optimization problems of kind \textbf{(1)} and \textbf{(2)} are discussed in Section~\ref{sec:uno},
whereas problems of kind \textbf{(3)} are discussed in Section~\ref{sec:tre}. Finally, Section~\ref{sec:extension} draws our conclusions.

\section{CSPs, Acyclic Instances, and their Desirable Properties}\label{sec:preliminaries}

An instance of a {constraint satisfaction problem} \cite{dech-03} is a triple $\P=\tuple{{\it Var},U,\C}$, where ${\it Var}$ is a finite set of
variables, $U$ is a finite domain of values, and $\C=\{C_1,C_2,\ldots,C_q\}$ is a finite set of constraints. Each constraint $C_v$, for $1\leq
v\leq q$, is a pair $(S_v,r_v)$, where $S_v\subseteq {\it Var}$ is a set of variables called the {\em constraint scope}, and $r_v$ is a set of
substitutions (also called \emph{tuples}) from variables in $S_v$ to values in $U$ indicating the allowed combinations of simultaneous values
for the variables in $S_v$. Any substitution from a set of variables $V\subseteq {\it Var}$ to $U$ is extensively denoted as the set of pairs
of the form $X/u$, where $u\in U$ is the value to which $X\in V$ is mapped. Then, a solution to $\P$ is a substitution $\theta:{\it Var}\mapsto
U$ for which $q$-tuples $t_1\in r_1,...,t_q\in r_q$ exist such that $\theta=t_1\cup...\cup t_q$.

\begin{example}
In the crossword puzzle of Figure~\ref{fig:crossword}, $\it Var$ coincides with the letters of the alphabet, and a variable $X_i$ (denoted by
its index $i$) is associated with each white box. An example of constraint is $C_{1H}=((1,2,3,4,5),r_{1H})$, and a possible instance for $r_{1H}$
is $\{ \tuple{h,o,u,s,e},\tuple{c,o,i,n,s},\tuple{b,l,o,c,k}\}$---in the various constraint names, subscripts $H$ and $V$ stand for
``Horizontal'' and ``Vertical,'' respectively, resembling the usual naming of definitions in crossword puzzles.~\hfill~$\lhd$
%
%
\end{example}

The structure of a CSP instance $\I$ is best represented by its associated hypergraph $\HG(\I)=(V,H)$,  where $V={\it Var}$ and $H=\{ S \mid
(S,r)\in {\mathcal C}\}$---in the following, $V$ and $H$ will be denoted by $\nodes(\HG)$ and $\edges(\HG)$, respectively.
As an example, the hypergraph associated with the crossword puzzle formalized above is illustrated in the central part of
Figure~\ref{fig:crossword}.

A hypergraph $\HG$ is {\em acyclic} iff it has a join tree~\cite{bern-good-81}. A {\em join tree} $\JT(\HG)$ for a hypergraph $\HG$ is a tree
whose vertices are the hyperedges of $\HG$ such that, whenever the same node $X\in V$ occurs in two hyperedges $h_1$ and $h_2$ of $\HG$, then
$X$ occurs in each vertex on the unique path linking $h_1$ and $h_2$ in $\JT(\HG)$. The notion of acyclicity we use here is the most general
one known in the literature, coinciding with $\alpha$-acyclicity according to Fagin~\cite{fagi-83}.
Note that the hypergraph $\HG_{cp}$ of Figure~\ref{fig:crossword} is not acyclic. An acyclic hypergraph is discussed below.

\begin{figure*}[t]
  \centering
  \includegraphics[width=0.99\textwidth]{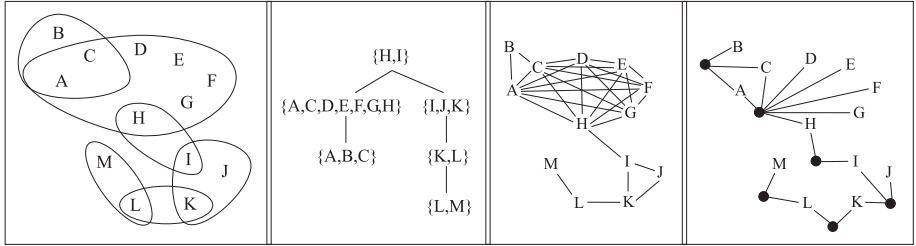}\vspace{-2mm}
  \caption{A hypergraph $\HG_1$, a join tree $\JT(\HG_1)$, the primal graph $\primal(\HG_1$), and the incidence
  graph $\incidence(\HG_1)$.}\label{fig:hypegraph}\vspace{-3mm}
\end{figure*}

\begin{example}
Consider the hypergraph $\HG_1$ shown on the left of Figure~\ref{fig:hypegraph}, which is associated with a CSP instance over the set of
variables $\{A,...,M\}$. In particular, six constraints are defined over the instance whose scopes precisely correspond to the hyperedges in
$\edges(\HG_1)$; for instance, $\{A,B,C\}$ is an example of constraint scope.
Note also that $\HG_1$ is acyclic. Indeed, a join tree $\JT(\HG_1)$ for it is reported in the same figure to the right of $\HG_1$.\hfill $\lhd$
\end{example}

An important property of acyclic instances is that they can efficiently be processed by dynamic programming. Indeed, according to Yannakakis'
algorithm~\cite{yann-81} (originally conceived in the equivalent context of evaluating acyclic Boolean conjunctive queries), they can be
evaluated by processing any of their join trees bottom-up, by performing upward semijoins between the constraint relations, thus keeping the
size of the intermediate result small. At the end, if the constraint relation associated with the root atom of the join tree is not empty, then
the CSP instance does admit a solution.
Therefore, the whole procedure is feasible in $O(n\times r_{max}\times \log r_{max})$,
where $n$ is
the number of constraints and $r_{max}$ denotes the size of the largest constraint relation.

In addition to the polynomial time algorithm for deciding whether a CSP admits a solution, acyclic instances enjoy further desirable
properties:

\vspace{-2mm}
\begin{description}
\item[Acyclicity is efficiently recognizable:] Deciding whether a hypergraph is acyclic is feasible in linear time \cite{tarj-yann-84} and
    belongs to the class $\DL$ (deterministic logspace). Indeed, this follows from the fact that hypergraph acyclicity belongs to $\SL$
    \cite{gott-etal-01}, and that $\SL$ is equal to $\DL$ \cite{rein-04}.

\item[Acyclic instances can be efficiently solved:] After the bottom-up step described above, one can perform the reverse top-down step
    by filtering each child vertex from those tuples that  do not match with its parent tuples.
    The relations obtained after the top-down step enjoy the {\em global consistency} property, i.e.,
    they contain only tuples whose values are part of some solution of the CSP.
    Then, all solutions can be computed with a backtrack-free
    procedure, and thus in {\em total polynomial time}, i.e., in time
    polynomial in the input plus the output~\cite{yann-81} (and
    actually also with polynomial delay). Alternatively, one may enforce pairwise consistency by taking the semijoins between all pairs of relations
    until a fixpoint is reached. Indeed, acyclic
    instances that fulfil this property also fulfil the global consistency property \cite{beer-etal-83}.

\item[Acyclic instances are parallelizable:] It has been shown that solving acyclic CSP instances is highly parallelizable, as this problem
    (actually, deciding the existence of a solution) is complete for the low complexity class $\LCFL$~\cite{gott-etal-01}. Efficient
    parallel algorithms are discussed in~\cite{gott-etal-01} and~\cite{gott-etal-98b}.
\end{description}

We conclude this section by recalling that the above desirable properties of acyclic CSP instances have profitably been exploited in various
application scenarios. Indeed, besides their application in the context of Database Theory, they found applications in Game
Theory~\cite{GGS05,DP06}, Knowledge Representation and
Reasoning~\cite{GPW06}, and Electronic Commerce \cite{GG07}, just to
name a few.

\section{Generalizing acyclicity} \label{sec:methods}

Many attempts have been made in the literature for extending the good results about acyclic instances to relevant classes of {\em nearly
acyclic} structures. We call these techniques {\em structural decomposition methods}, because they are based on the ``acyclicization'' of
cyclic (hyper)graphs.
We refer the interested reader to \cite{scar-etal-08} for a detailed description of how these techniques may be useful
for constraint satisfaction problems and to \cite{grec-scar-03} for further results about graph-based techniques, when relational structures
are represented according to various graph representations (primal graph, dual graph, incidence-graph encoding).
We also want to mention recent methods such as Spread-cuts~\cite{cohe-etal-06} and fractional hypertree decompositions~\cite{GroheMarx06}.

A survey of most of these techniques is currently available in Wikipedia (look for ``decomposition method'', at \url{http://www.wikipedia.org}).  In the sequel, we
shall briefly overview the tree and hypertree decomposition methods.

\subsection{Tree Decompositions}
\label{subsection:tw}
For classes of instances having only binary constraints or, more generally, constraints whose scopes have a fixed maximum arity, the most
powerful structural method is based on the notion of treewidth.

\begin{definition}[\cite{RS84}] \label{def:treewidth}\em
A {\em tree decomposition} of a graph $G=(V,E)$ is a pair $\tuple{T,\chi}$, where $T=(N,F)$ is a tree, and $\chi$ is a labelling function
assigning with each vertex $p\in N$ a set of vertices $\chi(p)\subseteq V$ such that the following conditions are satisfied: (1) for each node
$b$ of $G$, there exists $p\in N$ such that $b\in \chi(p)$; (2) for each edge  $(b,d)\in E$, there exists $p\in N$ such that $\{b,d\}\subseteq
\chi(p)$; and, (3) for each node $b$ of $G$, the set $\{p\in N \mid  b\in \chi(p)\}$ induces a connected subtree of $T$ (\emph{connectedness
condition}).
The {\em width} of $\tuple{T,\chi}$ is the number $\max_{p\in N}(|\chi(p)|-1)$. The {\em treewidth} of $G$, denoted by $tw(G)$, is the minimum
width over all its tree decompositions. \hfill $\Box$
\end{definition}

It is well-known that a graph $G$ is acyclic if and only if $tw(G)=1$. Moreover, for any fixed natural number $k>0$, deciding whether
$tw(G)\leq k$ is feasible in linear time \cite{bodl-96}.

Any CSP with primal graph $G$ such that $tw(G)\leq k$ can be (efficiently) turned into an equivalent CSP whose primal graph is acyclic. Let
$\P=\tuple{{\it Var},U,\C}$ be a CSP instance, let $G$ be the primal graph of $\HG(\P)$, and let $\tuple{T,\chi}$ be a tree decomposition of
$G$ having width $k$. We may build a new acyclic CSP instance $\P'=\tuple{{\it Var},U,\C'}$ over the same variables and universe as $\P$, but
with a different set of constraints $\C'$, as follows. Firstly, for each vertex $v$ of $T$, we create a constraint $(S_v,r_v)$, where
$S_v=\chi(v)$ and $r_v= U^{|\chi(v)|}$. Then, for every constraint $(S,r)\in \C$ of the original problem such that $S\subseteq \chi(v)$, we
eliminate from $r_v$ all those tuples that do not match with $r$. The resulting constraint is then added to $\C'$. It can be shown that $\P'$
has the same solutions as $\P$, and that it is acyclic. In fact, observe that, by construction, $\tuple{T,\chi}$ is a join tree of the
hypergraph $\HG(\P')$ associated with $\P'$, because of the connectedness condition of tree decompositions.
Furthermore, building $\P'$ from $\P$ is feasible in $O(n \times |U|^{k+1})$ where $n$ is the number of vertices in $T$, and where the size of
the largest constraint relation in the resulting instance is $|U|^{k+1}$. Since one can always consider only tree decompositions whose number
of vertices is bounded by the number of variables of the problem (i.e., the nodes of the graph), it follows that deciding whether $\P'$ (and
hence $\P$) is satisfiable is feasible in $O(|Var|\times|U|^{k+1} \times \log |U|^{k+1})$. In fact, as for acyclic instances, even in this case
we may compute also solutions for $\P$ with a backtrack-free search, after the preprocessing of the instance performed according to the given
tree decomposition (i.e., according to the join tree of the equivalent acyclic instance). As a consequence, all classes of CSP instances (with
primal graphs) having bounded treewidth may be solved in polynomial time, even if with an exponential dependency on the treewidth.

Clearly enough, this technique is not very useful for CSP instances with large constraint scopes. In particular, the class of CSP instances
whose associated constraint hypergraphs are acyclic are not tractable according to tree decompositions, because acyclic hypergraphs may have
unbounded treewidth. Intuitively, in  the primal graph all variables occurring in the same constraint scope are connected to
each other, and thus they lead to a clique in the graph. It follows that CSP instances having constraint scopes with large arities have large
treewidths, too, because the treewidth of a clique of $n$ nodes is $n-1$. As an example, Figure~\ref{fig:hypegraph} reports the graph
$\primal(\HG_1)$ associated with the acyclic hypergraph $\HG_1$, where one may notice how the hyperedge $\{A,C,D,E,F,G,H\}$ is flattened into a
clique over all its variables.

\subsection{Hypertree Decompositions}
\label{subsection:ht}

\nop{To overcome the drawbacks of tree decompositions, a number of hypergraph-oriented techniques have been defined in the literature,
exploiting the same idea of appropriately transforming the hypergraph
of a given instance into the join tree of an equivalent acyclic
instance
whenever possible.
A  hypergraph-oriented decomposition $(T,\Chi,\lambda)$ of a
hypergraph $\cal H$
or of a CSP instance whose  hypergraph is $\cal H$)
consists of a tree decomposition $(\T,\chi)$ plus
a function $\lambda: vertices(T)\longrightarrow 2^{edges({\cal H}}$,
such that for each $p\in vertices(T)$, the hyperedges in
$\lambda(p)$ jointly cover $\chi(p)$.
The {\em width}, also called {\em
  covering number}  of such a a decomposition is then defined as
the maximum cardinality $|\lambda(p)|$ over all vertices $p$ of the
decomposition tree, rather than as the maximum
cardinality of $\chi(p)$. Different types of hypergraph
decompositions correspond to different restrictions imposed on
$\lambda$. For each different type $D$ of hypergraph decomposition, the
{\em width) or {\em covering number} according to $D$ is the smallest
with of each legal decomposition according to $D$.

Computing solutions using hypergraph-based decompositions can be done
as follows. For each vertex $v$ of the tree decomposition $\tuple{T,\chi}$,
the relation $r_v\subseteq U^{|\chi(v)|}$ is built
by first combining all matching tuples of all $r_i$

Considering any set $\lambda(v)$ of constraint
scopes that cover those variables, i.e., such that $\chi(v)\subseteq
\bigcu $r_v$ with the set of tuples over $\chi(v)$ that match some tuple in every
constraint relation associated with scopes in $\lambda(v)$. In database terms, this means computing the projection over $\chi(v)$ of the
natural join $\bowtie_{S_i\in \lambda(v)} r_i$, whose size is at most $r_{max}^{|\lambda(v)|}$, if $r_{max}$ is the size of the largest
relation. Therefore, in this case we have an exponential dependency on the number of hyperedges covering the variables, instead of on the
number of variables.}

Of course, for any reasonable restriction on $\lambda$, this  covering
number is at least as small as the number of variables,
and it is very often much smaller if there are constraint with
large scopes. Indeed, such constraints become here very important resources, instead of just source of troubles. Even in the worst case of a
clique with $n$ nodes in a CSP with binary relations only, we just need $n/2$ scopes (edges) to cover all $n$ variables, with an exponential
gain on the cost of solving the CSP at hand.

\begin{figure*}[t]
  \centering
  \includegraphics[width=0.79\textwidth]{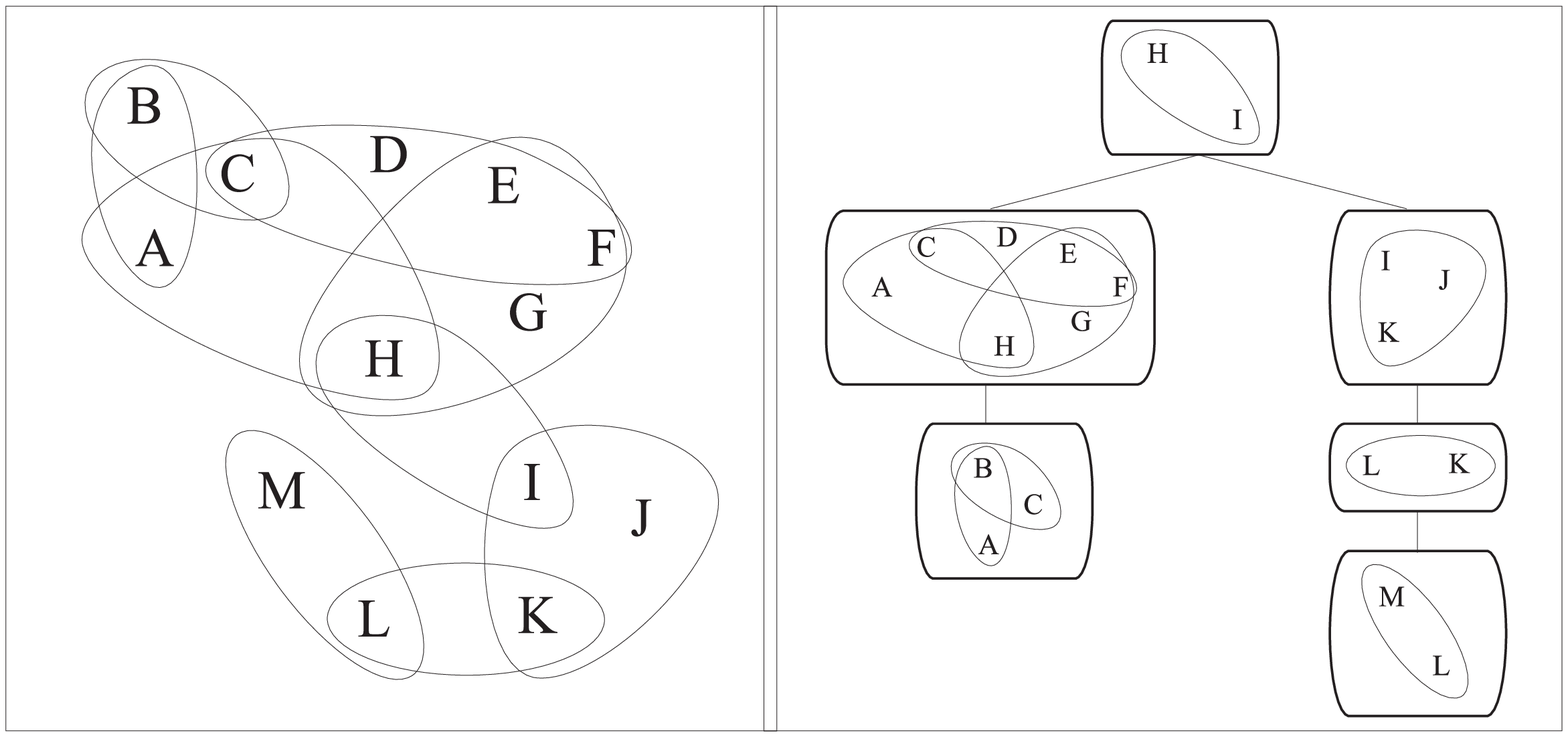}\vspace{-2mm}
  \caption{A hypergraph $\bar \HG_1$, which is not acyclic; hyperedges in $\edges(\bar \HG_1)$ arranges as a tree of 6 clusters, whose acyclicization leads
  to the hypergraph $\HG_1$ in Figure~\ref{fig:hypegraph}.}\label{fig:hypertree}\vspace{-3mm}
\end{figure*}

\begin{example}
Consider the hypergraph $\bar \HG_1$ shown in Figure~\ref{fig:hypertree}, and notice that its associated primal graph precisely coincides with
the primal graph of $\HG_1$ reported in Figure~\ref{fig:hypegraph}. Thus, the join tree reported in Figure~\ref{fig:hypegraph} can equivalently
be viewed as a tree decomposition of $\primal(\bar \HG_1)$, and $\HG_1$ as the acyclic hypergraph that can be built from $\bar \HG_1$. By using
the tree decomposition approach discussed in Section~\ref{subsection:tw}, however, constructing the equivalent CSP instance over $\HG_1$ would
not be feasible over a ``large'' domain $U$, since computing the relation associated with the variables $\{A,C,D,E,F,G,H\}$ would cost
$O(|U|^{7})$.

In fact, since these 7 variables can be covered via 3 hyperedges in $\edges(\bar \HG_1)$ only, the relation associated with the vertex over
$\{A,C,D,E,F,G,H\}$  can actually be computed in $O(r_{max}^{3})$---example coverings for all the vertices in the tree decomposition of
$\primal(\HG_1)$ are reported on the right of Figure~\ref{fig:hypertree}. \hfill $\lhd$
\end{example}

The above observation leads us to the notion of {\em generalized hypertree width}, where we are interested in those tree decompositions such
that the variable labelling of every vertex may be covered with the smallest number of hyperedges.
}

Let us now turn our attention to hypergraph based decompositions. Such decompositions are similar to tree decompositions, but they use an
additional covering of each set $\chi(p)$ with as few as possible hyperedges. The width is then no longer defined as the maximum cardinality of
$\chi(p)$ over all decomposition nodes $p$, but as the maximum number of hyperedges used to cover $\chi(p)$. Intuitively, this notion of width
is better, because it will allow us to expresses more accurately the computational effort needed to transform an instance into an acyclic one.

\begin{definition}[\cite{gott-etal-03}]\label{defn:genhyper-decomp}\em\
A {\em generalized hypertree decomposition} of a hypergraph $\HG$ is a triple $\HD=\tuple{T,\chi,\lambda}$, where $\tuple{T,\chi}$ is a tree
decomposition of the primal graph of $\HG$, and $\lambda$ is a labelling of the tree $T$ by sets of hyperedges of $\HG$ such that, for each
vertex $p\in vertices(T)$, $\chi(p)\subseteq \bigcup_{h\in \lambda(v)} h$. That is, all variables in the $\chi$ labeling are covered by
hyperedges (scopes) in the $\lambda$ labeling.
The {\em width} of $\HD$ is the number $\max_{p\in vertices(T)}(|\lambda(p)|)$. The {\em generalized hypertree width} of $\HG$, denoted by
$ghw(\HG)$, is the minimum width over all its generalized hypertree decompositions. If $I$ is a CSP instance then $ghw(I):=ghw({\HG}(I))$.\hfill
$\Box$
\end{definition}

Clearly, for each CSP instance $I$, $ghw(I)\leq tw(I)$. Moreover, there are classes of CSPs having unbounded treewidth whose generalized
hypertree width is bounded\cite{gott-etal-03}.

Finding a suitable tree decomposition whose sets $\chi(p)$ may each be covered with a few hyperedges seems to be quite a hard task even in case
we have some fixed upper bound $k$. Indeed, it has been shown that deciding whether $ghw(\HG)\leq k$ is $\NP$-complete (for any fixed $k\geq
3$)~\cite{GMS07}. Fortunately, since its first proposal in \cite{gott-etal-99}, this notion comes with a tractable variant, called hypertree
decomposition, whose associated width is at most 3 times (+1) larger than the generalized hypertree width~\cite{Galtro}. As a consequence, it
can be shown that every class of CSPs that is tractable according to generalized hypertree width is tractable according to hypertree width, as
well.

\begin{definition}[\cite{gott-etal-99}]\label{defn:hyper-decomp}\em\
A {\em hypertree decomposition} of a hypergraph $\HG$ is a generalized hypertree decomposition $\HD=\tuple{T,\chi,\lambda}$ that satisfies the
following additional condition, called {\it Descendant Condition} or also {\it special
condition}:
 $\forall p\in vertices(T)$, $\forall h\in \lambda(p)$, $h \cap \chi(T_p) \;\subseteq\; \chi(p)$,
 where $T_p$ denotes the subtree of $T$ rooted at $p$, and
  $\chi(T_p)$ the set of all variables occurring in the $\chi$ labeling of this subtree.

The {\em hypertree width} $hw(\HG)$ of $\HG$ is the minimum width over all its hypertree decompositions. \hfill $\Box$
\end{definition}

As an example, on the right part of Figure~\ref{fig:crossword} a hypertree decomposition of the hypergraph $\HG_{cp}$ in Example~\ref{ex:cross} is reported. Note that this decomposition has width 2.

We refer the interested reader to \cite{gott-etal-99,scar-etal-08} for more details about this notion and in particular about the descendant
condition. Here, we just observe that the notions of hypertree width and generalized hypertree width are true generalizations of acyclicity, as
the acyclic hypergraphs are precisely those hypergraphs having hypertree width and generalized hypertree width one. In particular, the classes
of CSP instances having bounded (generalized) hypertree width have the same desirable computational properties as
acyclic CSPs \cite{gott-etal-01}. Indeed, from a CSP instance $\P=\tuple{{\it Var},U,\C}$ and a (generalized) hypertree decomposition $\HD$ of
$\HG(\P)$ of width $k$, we may build an acyclic CSP instance $\P'=\tuple{{\it Var},U,\C'}$ with the same solutions as $\P$. The overall cost of
deciding whether $\P$ is satisfiable is in this case $O((m-1) \times r_{max}^k \times \log r_{max}^k)$, where $r_{max}$ denotes the size of the
largest constraint relation and $m$ is the number of vertices of the decomposition tree, with $m\leq |{\it Var}|$ (in that we may always find
decompositions in a suitable normal form without redundancies, so that the number of vertices in the tree cannot exceed the number of variables
of the given instance). To be complete, if the input consists of $\P$ only, we have to compute the decomposition, too. This can be done with a
guaranteed polynomial-time upper bound in the case of hypertree decompositions \cite{gott-etal-99}.

In the following two sections, we provide some tractability results for optimization problems. For the sake of presentation, we give algorithms
for the acyclic case, provided that these results may be clearly extended to any class of instances having bounded (generalized) hypertree
width, after the above mentioned polynomial-time transformation.

\section{Optimization Problems over CSP Solutions}\label{sec:uno}

In this section, we consider optimization problems where an assignment has to be singled out that satisfies all the constraints of the
underlying CSP instance and that has minimum total cost; in other words, we look for a ``best'' solution among all the possible solutions. In
particular, below, we shall firstly address the case where each possible variable-value mapping is associated with a cost (also called
constraint satisfaction optimization problem); then we shall consider the case where costs are defined over the constraints tuples (weighted
CSP).

\subsection{Constraint Satisfaction Optimization Problems}

An instance of a {\em constraint satisfaction optimization problem} ($\WCSP$) consists of a pair $\tuple{\P,\w}$, where $\P=\tuple{{\it
Var},U,\C}$ is a CSP instance and where $\w:{\it Var}\times U\mapsto \mathbb{Q}$ is a function mapping substitutions for individual variables
to rational numbers. For a substitution $\{X_1/u_1,...,X_n/u_n\}$, we denote by $\w(\{X_1/u_1,...,X_n/u_n\})$ the value $\sum_{i=1}^n
\w(X_i,u_i)$. Then, a solution to a $\WCSP$ instance $\tuple{\P,\w}$ is a solution $\theta$ to $\P$ such that $\w(\theta)\leq \w(\theta')$, for
each solution $\theta'$ to $\P$. Details on this framework can be found, e.g., in \cite{FCS}.

\begin{figure}[t]
\centering \fbox{
\parbox{0.91\textwidth}{\scriptsize
\begin{tabular}{l}
  \nop{\item}\textbf{Input}: An acyclic CSOP instance $\tuple{\P,\w}$ with $\P=\tuple{{\it Var},U,\C}$, $\C=\{(S_1,r_1),...,(S_q,r_q)\}$, \\
  \hspace{9mm}  and a join tree $T=(N,E)$ of the hypergraph $\HG(\P)$; \\
  \nop{\item}\textbf{Output}: A solution to $\tuple{\P,\w}$;\\
  \nop{\item}\textbf{var} \  $\th^*:{\it Var}\mapsto U$;\\
  \nop{\item} \hspace{4mm} $\ell_{{\th_v}}^v:\mbox{rational number, for each tuple } \th_v\in \rel_v$;\\
  \nop{\item} \hspace{4mm} $\th_{\th_v,c}:\mbox{tuple in }\rel_c$, for each tuple $\th_v\in \rel_v$, and for each $(v,c)\in E$;\\
  \nop{\item} \begin{tabular}{l}\hspace{-3mm}--------------------------------------------------------------------------------------------------------------------------\\
  \ \\
  \hspace{107mm}\ \end{tabular}\\
  \nop{\item}\ \vspace{-11mm}\\
  \nop{\item}\hspace{4mm}\\
  \begin{tabular}{l}
  \textbf{Procedure} $BottomUp$;\\
  \textbf{begin}\\
  \ \ \ $Done :=$ the set of all the leaves of $T$;\\
  \ \ \ \textbf{while} $\exists v\in T$ such that (i) $v\not\in Done$, and  (ii) $\{c \mid c \mbox{ is child of } v\}\subseteq Done$ \textbf{do}\\
  \ \ \ \ \ \ $\rel_v:=\rel_v-\{\th_v\mid \exists (v,c)\in E\mbox{ such that }\forall \th_c\in\theta_c,\ \th_v \not\conforms \th_c\}$;\\
  \ \ \ \ \ \ \textbf{if} $\rel_v=\emptyset$ \textbf{then} EXIT;\ \ \  (* \texttt{$\P$ is not satisfiable} *)\\
  \ \ \ \ \ \ \textbf{for each} $\th_v\in \rel_v$ \textbf{do}\\
  \ \ \ \ \ \ \ \ \ \ $\ell_{\th_v}^v:=w(\th_v)$;\\ 
  \ \ \ \ \ \ \ \ \ \ \textbf{for each} $c$ such that $(v,c)\in E$ \textbf{do}\\
  \ \ \ \ \ \ \ \ \ \ \ \ \ \ $\bar \th_c:=
  \arg \min_{\th_c\in \rel_c \mid \th_v \conforms \th_c} \left(\ell_{\th_c}^c- w(\th_c\cap\th_v)\right);$\\
  \ \ \ \ \ \ \ \ \ \ \ \ \ \ $\th_{\th_v,c}:=\bar \th_c$;\ \ \  (* \texttt{set best solution} *)\\
  \ \ \ \ \ \ \ \ \ \ \ \ \ \ $\ell_{\th_v}^v:=\ell_{\th_v}^v+\ell_{\bar \th_c}^c- w(\bar \th_c\cap \th_v)$;\\ 
  \ \ \ \ \ \ \ \ \ \ \textbf{end for}\\
  \ \ \ \ \ \ \textbf{end for}\\
  \ \ \ \ \ \ $Done:=Done \cup \{v\}$;\\
  \ \ \ \textbf{end while}\\
  \textbf{end;}
  \end{tabular}\\
  \nop{\item}\begin{tabular}{l}\hspace{-3mm}--------------------------------------------------------------------------------------------------------------------------\\
  \ \\
  \hspace{112mm}\ \end{tabular}\\
  \nop{\item}\ \vspace{-11mm}\\
  \nop{\item}\\
  \hspace{-1mm}\begin{tabular}{ll|l}
  \begin{tabular}{l}
  \nop{\item}\textbf{begin} (* MAIN *)\\
  \nop{\item} \ \ \ \ $BottomUp$;\\
  \nop{\item} \ \ \ \ let $r$ be the root of $T$;\\
  \nop{\item} \ \ \ \ $\bar \th_r:=\arg \min_{\th_r \in \rel_r} \ell_{\th_r}^r$;\\
  \nop{\item} \ \ \ \ $\th^*:=\bar \th_r$;\ \ \  (* \texttt{include solution} *)\\
  \nop{\item} \ \ \ \ $TopDown(r,\bar \th_r)$;\\
  \nop{\item} \ \ \ \ \textbf{return} $\th^*$;\\
  \nop{\item}\textbf{end}.\\
  \end{tabular}&\hspace{-1mm} & \hspace{0mm}
  \begin{tabular}{l}
  \vspace{-6mm}\nop{\item}\ \\ \ \\
  \textbf{Procedure} $TopDown(v:\mbox{vertex of } N$, $\th_v \in r_v$); \\
  \textbf{begin}\\
  \ \ \ \textbf{for each} $c\in N$ s.t. $(v,c)\in E$ \textbf{do}\\
  \ \ \ \ \ \ \ $\bar \th_c:=\th_{\th_v,c}$;\\
  \ \ \ \ \ \ \ $\th^*:=\th^*\cup \bar \th_c$;\ \ \  (* \texttt{include solution} *)\\
  \ \ \ \ \ \ \ $TopDown(c,\bar \th_c$);\\
  \ \ \ \textbf{end for}\\
  \textbf{end;}
  \end{tabular}
  \end{tabular}
\end{tabular}
}}
 \vspace{-1mm}\caption{\textbf{Algorithm} {\sc ComputeOptimalSolution}.} \label{fig:algoritmo2}\vspace{-2mm}
\end{figure}

Constraint satisfaction optimization problems naturally arise in various application contexts. As an example they have recently been used in
the context of combinatorial auctions~\cite{GG07}, in order to model and solve the \emph{winner determination problem} of determining the
allocation of the items among the bidders that maximizes the sum of the accepted bid prices. In particular, in \cite{GG07}, it has been
observed that CSOPs and, in particular, the winner determination problem, can be solved in polynomial time on
some classes of acyclic instances via a dynamic programming algorithm founded on the ideas of \cite{yann-81}. This algorithm, named {\sc
ComputeOptimalSolution}, is reported in Figure~\ref{fig:algoritmo2} and will be briefly illustrated in the following.

The algorithm receives in input the instance $\tuple{\P,\w}$ and a join tree $T=(N,E)$ for $\HG(\P)$. Recall that each vertex $v\in N$
corresponds to a hyperedge of $\HG(\P)$ and, in its turn, to a constraint in $\C$; hence, we shall simply denote by $(S_v,r_v)$ the constraint
in $\C$ univocally associated with vertex~$v$.

Based on $\tuple{\P,\w}$ and $T$, {\sc ComputeOptimalSolution} computes an optimal solution (or checks that there is no solution) by looking
for the ``conformance'' of the tuples in each relation $\rel_v$ with the tuples in $\rel_c$, for each child $c$ of $v$ in $T$, where $\th_v\in
r_v$ is said to \emph{conform} with $\th_c\in r_c$, denoted by $\th_v \conforms \th_c$, if for each $X\in S_v\cap S_c$, $X/u \in \th_v
\Leftrightarrow X/u \in \th_c$.
In more detail, {\sc ComputeOptimalSolution} solves $\tuple{\P,\w}$ by traversing $T$ in two phases. First, vertices of $T$ are processed from
the leaves to the root $r$, by means of the procedure $BottomUp$ that updates the weight $\ell_{{\th_v}}^v$ of the current vertex $v$.
Intuitively, $\ell_{{\th_v}}^v$ stores the cost of the best partial solution for $\P$ computed by using only the variables occurring in the
subtree rooted at $v$. Indeed, if $v$ is a leaf, then $\ell_{{\th_v}}^v=w(\th_v)$. Otherwise, for each child $c$ of $v$ in $T$,
$\ell_{{\th_v}}^v$ is updated by adding the minimum value $\ell_{\th_c}^c- w(\th_c\cap \th_v)$ over all tuples $\th_c$ conforming with $\th_v$.
The tuple $\bar \th_c$ for which this minimum is achieved is stored in the variable $\th_{\th_v,c}$ (resolving ties arbitrarily). Note that if
this process cannot be completed, because there is no tuple in $r_v$ conforming with some tuple in each relation associated with the children
of $v$, then we may conclude that $\P$ is not does not admit any solution. Otherwise, after the root $r\in N$ is reached, this part ends, and
the top-down phase may start.

In this second phase, the tree $T$ is processed starting from the root. Firstly, the assignment $\th^*$ is defined as the tuple in $\rel_r$
with the minimum cost over all the tuples in $\rel_r$ (again, resolving ties arbitrarily). Then, procedure $TopDown$ extends $\th^*$ with a
tuple for each vertex of $T$: at each vertex $v$ and for each child $c$ of $v$, $\th^*$ is extended with the tuple $\th_{\th_v,c}$ resulting
from the bottom-up phase.

Being based on a standard dynamic programming scheme, correctness of {\sc ComputeOptimalSolution} can be shown by structural induction on the
subtrees of $T$ \cite{GG07}. Moreover, by analyzing its running time, one may note that dealing with cost functions does not (asymptotically)
provide any overhead w.r.t. Yannakakis's algorithm \cite{yann-81} for plain CSPs. Following~\cite{GG07}, the following can be shown for the
more general case of $\WCSP$ instances having bounded generalized hypertree-width hypergraphs.\footnote{In all complexity results, we assume
the weighting function $\w$ be explicitly listed in the input (otherwise, just add the cost of computing through $\w$ all cost values for the
variable assignments of the given input instance).}

\begin{theorem}\label{thm:teo1}
Let $\tuple{\P,\w}$ be a {\em $\WCSP$} instance and $\HD$ a (generalized) hypertree decomposition of $\HG(\P)$. Moreover, let $k$ be the width
of $\HD$ and $m$ be the number of vertices in its decomposition tree. Then, a solution to $\tuple{\P,\w}$ can be computed (or it is discovered
that no solution exists) in time $O((m-1)\times r_{max}^k\times \log r_{max}^k)$, where $r_{max}$ is the size of the largest constraint
relation in $\P$.
\end{theorem}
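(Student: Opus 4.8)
The plan is to handle the general bounded-width case by transforming the instance into an acyclic one, and then to show that the dynamic-programming algorithm {\sc ComputeOptimalSolution} of Figure~\ref{fig:algoritmo2} solves the resulting acyclic CSOP within the stated bound. For the transformation, from $\P=\tuple{{\it Var},U,\C}$ and the width-$k$ decomposition $\HD=\tuple{T,\chi,\lambda}$ I would build, exactly as described in Section~\ref{sec:methods}, the acyclic CSP instance $\P'=\tuple{{\it Var},U,\C'}$ that has one constraint $(\chi(p),r'_p)$ per vertex $p$ of $T$, where $r'_p$ is the projection onto $\chi(p)$ of the join of the at most $k$ constraint relations whose scopes form $\lambda(p)$ (and which therefore cover $\chi(p)$); then $T$ is a join tree of $\HG(\P')$, the instance $\P'$ has exactly the same solutions as $\P$, each $r'_p$ has at most $r_{max}^k$ tuples, and $\P'$ together with $T$ is produced in $O((m-1)\times r_{max}^k\times\log r_{max}^k)$ time. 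The key remark is that $\tuple{\P',\w}$ is a $\WCSP$ instance for the \emph{same} weighting function $\w$, since $\w$ is defined on ${\it Var}\times U$ independently of the constraints; hence $\theta$ is an optimal solution of $\tuple{\P,\w}$ if and only if it is an optimal solution of $\tuple{\P',\w}$, so it suffices to run {\sc ComputeOptimalSolution} on $\tuple{\P',\w}$ with input join tree $T$.

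Correctness is the structural induction of~\cite{GG07}: running $BottomUp$ over $T$, one proves that $\ell^v_{\th_v}$ equals the minimum, over all assignments to the variables occurring in the $\chi$-labels of the subtree $T_v$ rooted at $v$ that are consistent with $\th_v$, of their total $\w$-cost; the subtraction of $\w(\th_c\cap\th_v)$ performed when combining a child $c$ guarantees, by the connectedness condition of join trees, that every such variable is charged exactly once, and the tuple attaining the minimum is recorded in $\th_{\th_v,c}$. If some relation is emptied, $\P'$ — hence $\P$ — has no solution; otherwise, since $\bigcup_{p}\chi(p)={\it Var}$, procedure $TopDown$ follows the stored pointers and assembles a \emph{total} assignment $\th^*$, which is a solution of $\P'$, equivalently of $\P$, of minimum $\w$-cost.

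For the running time, the cost is dominated by $BottomUp$ and, within it, by the work spent on each of the $m-1$ tree edges $(v,c)$: the conformance-based semijoin that prunes $r_v$, and, for every surviving $\th_v\in r_v$, the computation of $\arg\min_{\th_c\in r_c,\ \th_v \conforms \th_c}(\ell^c_{\th_c}-\w(\th_c\cap\th_v))$. The crucial observation is that for conforming $\th_v$ and $\th_c$ one has $\th_c\cap\th_v=\th_v|_{S_v\cap S_c}$, so the corrective term is constant within each conformance class and can be dropped from the $\arg\min$; hence it is enough to sort $r_c$ once by its projection onto $S_v\cap S_c$ in $O(r_{max}^k\log r_{max}^k)$ time, to record in a single pass, for each occurring projection, a tuple of least $\ell^c_{\th_c}$, and then to serve every $\th_v$ (together with the semijoin) by a binary search in $O(\log r_{max}^k)$ time. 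So each edge is handled in $O(r_{max}^k\log r_{max}^k)$ time — the same order as a Yannakakis semijoin — the final $\arg\min$ over $r_r$ costs $O(r_{max}^k)$, and $TopDown$ merely chases $m$ pointers; adding the $O((m-1)\times r_{max}^k\times\log r_{max}^k)$ of the transformation gives the claimed bound. I expect the only genuinely delicate point to be exactly this last one: verifying that passing from the decision version to the weighted version does not raise the per-edge cost from $O(r_{max}^k\log r_{max}^k)$ to $O(r_{max}^{2k})$; all remaining ingredients are already available in the excerpt.
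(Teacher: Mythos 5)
Your proposal is correct and follows essentially the same route as the paper, which only sketches the argument: use the width-$k$ decomposition to build an equivalent acyclic instance in time $O((m-1)\times r_{max}^k\times\log r_{max}^k)$, then run {\sc ComputeOptimalSolution} on its join tree, with correctness by structural induction as in \cite{GG07} and a cost analysis showing no asymptotic overhead over Yannakakis's algorithm. Your observation that $\w(\th_c\cap\th_v)$ is constant on each conformance class, so the $\arg\min$ reduces to a sort-and-group semijoin-style pass, is exactly the point the paper leaves implicit when asserting that the weights add no overhead.
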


\subsection{Weighted CSPs: Costs over Tuples}\label{sec:WCSP}

Let us now turn to study a slight variation of the above scenario, where costs are associated with each {\em tuple} of the constraint
relations, rather than with substitutions for individual variables. In fact, this is the setting of weighted CSPs, a well-known specialization
of the more general \emph{valued} CSP framework \cite{SFV95}.

Formally, a {\em weighted} CSP ($\WCSPP$) instance consists of a tuple $\tuple{\P,\w_1,...,\w_q}$, where $\P=\tuple{{\it Var},U,\C}$ with
$\C=\{C_1,C_2,\ldots,C_q\}$ is a CSP instance, and where, for each tuple $t_v\in r_v$, $\w_v(t_v)\in \mathbb{Q}$ denotes the cost associated
with $t_v$. For a solution $\theta=t_1\cup...\cup t_q$ to $\P$, we define $\w(\theta)=\sum_{v=1}^q \w_v(t_v)$ as its associated cost. Then, a
solution to $\tuple{\P,\w_1,...,\w_q}$ is a solution $\theta$ to $\P$ such that $\w(\theta)\leq \w(\theta')$, for each solution $\theta'$ to
$\P$.

A few tractability results for $\WCSPP$s (actually, for valued CSPs) are known in the literature when structural restrictions are considered
over binary encodings of the constraint hypergraphs. Indeed, it has been observed that $\WCSPP$s are tractable when restricted on classes of
instances whose associated primal graphs are acyclic or nearly-acyclic (see, e.g., \cite{TJ03,GSV06,NJT08}). However, the primal graph obscures
much of the structure of the underlying hypergraph since, for instance, each hyperedge is turned into a clique there---see the discussion in
Section~\ref{sec:methods}.

Therefore, whenever constraints have large arities, tractability results for primal graphs are useless, and it becomes then natural to ask
whether polynomial-time solvability still holds when moving from (nearly-)acyclic primal graphs to acyclic hypergraphs, possibly associated
with very intricate primal graphs. Next, we shall positively answer this question, by simply recasting weighted CSPs as constraint optimization
problems, and by subsequently solving them via the algorithm {\sc ComputeOptimalSolution}.
To this end, given a $\WCSPP$ instance $\tuple{\P,\w_1,...,\w_q}$, we define its associated $\WCSP$ instance, denoted by
$\WCSP(\tuple{\P,\w_1,...,\w_q})$, as the pair $\tuple{\P',w'}$ with $\P'=\tuple{{\it Var}',U',\C'}$ such that:

\vspace{-1mm}\begin{itemize}
  \item[$\bullet$] ${\it Var}'={{\it Var}\cup \{D_1,...,D_q\}}$, where each $D_v$ is a fresh auxiliary variable in $\P'$;

  \item[$\bullet$] $U'=U\cup\bigcup_{v=1}^q\bigcup_{t_v\in r_v} \{u_{t_v}\}$, i.e., for each constraint $(S_v,r_v)\in \C$, $U'$ contains a
  fresh value for each tuple in $r_v$---intuitively, mapping the variable $D_v$ to $u_{t_v}$ encodes that the tuple $t_v$ is going to contribute to a solution for $\P$;

  \item[$\bullet$] $\C'=\{ (S_v\cup\{D_v\},r_v') \mid  (S_v,r_v)\in \C\}$, where $r_v' = \{ t_v\cup \{D_v/u_{t_v}\} \mid t_v\in r_v \}$;

  \item[$\bullet$] $w'(X/u)=w_v(t_v)$ if $X=D_v$ and $u=u_{t_v}$, for some tuple $t_v\in r_v$; otherwise, $w'(X/u)=0$. That is,
  the whole cost of each tuple is determined by the mapping of its associated fresh variable $D_v$.
\end{itemize}

It is immediate to check that the above transformation is feasible in linear time. In addition, the transformation enjoys two relevant
preservation properties: Firstly, it preserves the structural properties of the $\WCSPP$ instance in that $\HG(\P')$ is acyclic if and only if
$\HG(P)$ is acyclic; and secondly, it preserves its solutions, in that $\theta'=t_1'\cup...\cup t_q'$ is a solution to
$\tuple{\P,\w_1,...,\w_q}$ if and only if $\theta=t_1\cup...\cup t_q$ is a solution to $\tuple{\P',w'}$, where $t_v'=t_v\cup \{D_v/u_{t_v}\}$
for each $1\leq v\leq q$. By exploiting these observations and Theorem~\ref{thm:teo1}, the following can be established.

\begin{theorem}
Let $\tuple{\P,\w_1,...,\w_q}$ be a {\em $\WCSPP$} instance and $\HD$ a (generalized) hypertree decomposition of $\HG(\P)$. Moreover, let $k$
be the width of $\HD$ and $m$ be the number of vertices in its decomposition tree. Then, a solution to $\tuple{\P,\w_1,...,\w_q}$ can be
computed (or we may state that there is no solution) in time $O((m-1)\times r_{max}^k\times \log r_{max}^k)$, where $r_{max}$ is the size of
the largest constraint relation in $\P$.
\end{theorem}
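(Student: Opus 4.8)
The plan is to reduce the statement to Theorem~\ref{thm:teo1} via the transformation $\WCSP(\cdot)$ described immediately above, so the proof is essentially a verification that the three claimed properties of the transformation (linear-time computability, preservation of acyclicity, preservation of solutions and their costs) actually hold, together with a check that the structural parameters are not blown up. First I would fix a $\WCSPP$ instance $\tuple{\P,\w_1,\dots,\w_q}$ and a (generalized) hypertree decomposition $\HD=\tuple{T,\chi,\lambda}$ of $\HG(\P)$ of width $k$ with $m$ vertices, and form the associated $\WCSP$ instance $\tuple{\P',w'}$ as defined in the bulleted construction.

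The first substantive step is to show $\HG(\P')$ has a (generalized) hypertree decomposition of width $k$ and $m$ vertices obtained from $\HD$. The point is that $\HG(\P')$ is obtained from $\HG(\P)$ by, for each hyperedge $h_v=S_v$, adding a single fresh vertex $D_v$ to $h_v$ only; the fresh variables occur in exactly one hyperedge each. So I would define $\HD'=\tuple{T,\chi',\lambda'}$ where $\lambda'(p)=\lambda(p)$ for every $p$ (now read as the enlarged hyperedges $S_v\cup\{D_v\}$), and $\chi'(p)=\chi(p)\cup\{D_v\mid S_v\in\lambda(p)\}$. I would then verify: the connectedness condition still holds (each $D_v$ lies in $\chi'(p)$ exactly for those $p$ with $S_v\in\lambda(p)$, and these induce a connected subtree because the original $\lambda$-occurrences of $h_v$ do, which is part of how hypertree decompositions are normalized — alternatively one may just add a leaf covering $S_v\cup\{D_v\}$ below one such $p$); every new primal edge $(D_v,X)$ with $X\in S_v$ is covered in that same $p$ since $S_v\cup\{D_v\}\subseteq\chi'(p)$; and $|\lambda'(p)|=|\lambda(p)|$ so the width is still $k$. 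The descendant condition, if $\HD$ was a hypertree decomposition, is likewise preserved since $D_v$ appears in $\chi'$ only where $S_v\cup\{D_v\}\in\lambda'$. In particular, taking $\HD$ of minimal width gives $ghw(\HG(\P'))\le ghw(\HG(\P))$, and acyclicity is the case $k=1$.

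The second step is the solution-and-cost correspondence. Here I would argue that the map $\theta=t_1\cup\dots\cup t_q \mapsto \theta'=t_1'\cup\dots\cup t_q'$ with $t_v'=t_v\cup\{D_v/u_{t_v}\}$ is a bijection between solutions of $\tuple{\P,\w_1,\dots,\w_q}$ and solutions of $\P'$: every tuple of $r_v'$ has the form $t_v\cup\{D_v/u_{t_v}\}$ with $t_v\in r_v$, and on the shared variables ${\it Var}$ the agreement conditions are literally the same as for $\P$, while the $D_v$ are pairwise disjoint fresh variables so they impose no extra constraint. Under this bijection, $w'(\theta')=\sum_v w'(t_v')=\sum_v\big(w_v(t_v)+\sum_{X\in S_v}w'(X/\theta'(X))\big)=\sum_v w_v(t_v)=\w(\theta)$, since $w'$ is $0$ off the fresh variables and equals $w_v(t_v)$ on $D_v/u_{t_v}$. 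Hence $\theta$ is of minimum cost for the $\WCSPP$ instance iff $\theta'$ is of minimum cost for $\tuple{\P',w'}$, and from an optimal $\theta'$ one recovers $\theta$ by discarding the $D_v/u_{t_v}$ pairs.

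Finally I would run {\sc ComputeOptimalSolution} / invoke Theorem~\ref{thm:teo1} on $\tuple{\P',w'}$ with the decomposition $\HD'$: it returns an optimal solution $\theta'$ (or reports none) in time $O((m-1)\times (r'_{max})^k\times\log (r'_{max})^k)$ where $r'_{max}$ is the size of the largest relation of $\P'$. Since $r_v'$ has exactly the same number of tuples as $r_v$ (the transformation only appends one pair to each tuple), $r'_{max}=r_{max}$ up to the constant-factor bookkeeping of one extra attribute, which is absorbed in the $O$-notation; translating $\theta'$ back costs linear time. Combining with the linear-time cost of building $\tuple{\P',w'}$ and $\HD'$ gives the stated $O((m-1)\times r_{max}^k\times\log r_{max}^k)$ bound. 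I expect the only mildly delicate point to be the bookkeeping that $\HD'$ really is a legal (generalized) hypertree decomposition — in particular the connectedness of the fresh variables' $\chi$-occurrences — which is why I would, if a clean argument is wanted, simply attach to $T$ a new leaf child of some fixed $p_v$ with $\chi=\lambda=\{S_v\cup\{D_v\}\}$ for each $v$; this keeps the width $k$, multiplies the number of vertices by at most a constant, and makes connectedness and the descendant condition immediate.
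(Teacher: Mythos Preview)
Your approach is the same as the paper's: transform the $\WCSPP$ instance into the $\WCSP$ instance $\tuple{\P',w'}$ via the bulleted construction, verify that solutions and costs are preserved and that the structural parameters do not blow up, and then invoke Theorem~\ref{thm:teo1}. The paper's own argument is in fact sketchier than yours --- it only states explicitly that acyclicity is preserved (the width-$1$ case) and leaves the general-$k$ transfer of the decomposition implicit --- so your explicit construction of $\HD'$ and your verification of the cost bijection are welcome additions rather than deviations.

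Two small remarks on bookkeeping. First, your primary construction places $D_v$ in $\chi'(p)$ for every $p$ with $S_v\in\lambda(p)$; as you yourself note, the set of such $p$ need not be connected in an arbitrary (generalized) hypertree decomposition, so this step really does rely on a normal form (which exists for hypertree decompositions) or on your fallback. Second, in the fallback you add one leaf per constraint, i.e.\ $q$ new vertices, which is not in general ``at most a constant'' factor of $m$; the resulting bound is $O((m+q-1)\times r_{max}^k\times\log r_{max}^k)$ rather than the stated $O((m-1)\times\cdots)$. This does not affect polynomiality and is within the level of precision the paper itself maintains, but if you want the bound exactly as stated you should instead argue via a normal-form decomposition (where every hyperedge occurs in $\lambda$ of some vertex and its $\lambda$-occurrences are connected), which exists without increasing width or vertex count.
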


\section{Minimizing the Number of Violated Constraints}\label{sec:tre}

In this section, we shall complete our picture  by considering those scenarios where problems might possibly be overconstrained and where,
hence, the focus is on finding assignment minimizing the total number of violated constraints. These kinds of problems are usually referred to
in the literature as $\VCSP$s~\cite{FW92}, which similarly as $\WCSPP$s are specializations of valued CSPs.

Formally, let $\theta:{\it Var}\mapsto U$ be an assignment for a CSP instance $\P=\tuple{{\it Var},U,\C}$. We say that the violation degree of
$\theta$, denoted by $\delta(\theta)$, is the number of relations $r_v$ such that there is no tuple $t_v\in r_v$ with $t_v\subseteq \theta$. An
assignment $\theta:{\it Var}\mapsto U$ is a solution to the $\VCSP$ instance (associated with $\P$) if $\delta(\theta)\leq \delta(\theta')$,
for each assignment $\theta':{\it Var}\mapsto U$. Note that $\VCSP$s instances, by definition, do always have a solution.

\subsection{Acyclic Instances Remain Intractable}

After the tractability results established in Section~\ref{sec:WCSP} for $\WCSPP$s, one may expect good news for  $\VCSP$s, too. Surprisingly,
this is not the case.

\begin{theorem}
Solving {\em $\VCSP$s} is $\rm NP$-hard, even when restricted over classes of instances with acyclic constraint hypergraphs.
\end{theorem}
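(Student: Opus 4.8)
The plan is to exhibit a polynomial-time reduction to $\VCSP$s over acyclic instances from a known $\NP$-hard optimization problem, and the natural candidate is \textsc{Max-Sat} (or, even more directly, \textsc{Min-Unsat}, i.e., minimizing the number of unsatisfied clauses of a CNF formula, which is $\NP$-hard already for formulas where every clause has few literals). First I would take an arbitrary CNF formula $\varphi$ with variables $x_1,\dots,x_n$ and clauses $c_1,\dots,c_m$, and build a CSP instance $\P=\tuple{{\it Var},U,\C}$ with ${\it Var}=\{x_1,\dots,x_n\}$, $U=\{0,1\}$, and one constraint $C_j=(S_j,r_j)$ per clause $c_j$, where $S_j$ is the set of variables occurring in $c_j$ and $r_j$ is the set of all truth assignments to those variables that satisfy $c_j$. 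An assignment $\theta:{\it Var}\mapsto U$ then violates exactly those constraints $C_j$ whose clause $c_j$ it falsifies, so $\delta(\theta)$ equals the number of clauses of $\varphi$ falsified by $\theta$; hence a $\VCSP$-solution of $\P$ is precisely a truth assignment minimizing the number of unsatisfied clauses, and the reduction is clearly polynomial.

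The key remaining step, and the crux of the argument, is to ensure the constraint hypergraph $\HG(\P)$ is \emph{acyclic}. In general it will not be, since $\HG(\P)$ has a hyperedge $S_j$ for each clause and these can overlap in arbitrarily tangled ways. The standard trick I would use is to add, for each variable $x_i$, a fresh ``guard'' variable and rearrange the hyperedges so that they all hang off a common spine; concretely, introduce fresh variables and a single big hyperedge (or a path of hyperedges) that contains every original variable, and attach each clause-hyperedge $S_j$ to it. The cleanest version: add one fresh variable $z_j$ to each clause scope, so $S_j'=S_j\cup\{z_j\}$, and add one extra ``central'' constraint whose scope is $\{x_1,\dots,x_n\}$, with relation $U^n$ (all assignments allowed, contributing nothing to satisfiability or to the violation count in the relevant regime). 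Then the join tree is a star: the central vertex in the middle, each $S_j'$ as a leaf; the connectedness condition holds because each original variable $x_i$ appears in the center and in each clause-leaf containing it, while each $z_j$ appears only in its own leaf. One must check that the central all-tuples constraint is never the cheapest thing to violate — but since it is satisfied by \emph{every} assignment it is never violated, so $\delta(\theta)$ over the enlarged instance still counts exactly the falsified clauses, and the optimum is unchanged. (Equivalently, and perhaps more carefully, replace the single central hyperedge by a path of hyperedges each of bounded arity if one wants to also control arity, but for mere acyclicity the star construction suffices.)

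The main obstacle I anticipate is precisely this bookkeeping around the auxiliary structure: one needs the added constraints to (i) restore acyclicity via an explicit join tree, (ii) not introduce any new violable constraint that could be cheaper to give up than a clause — handled by making every auxiliary relation the full product relation over its scope, so it is trivially satisfiable and hence never contributes to $\delta$ — and (iii) preserve the one-to-one correspondence between original truth assignments and assignments of the enlarged instance that matters for the objective (the fresh variables $z_j$ can be assigned arbitrarily since they are unconstrained, so every assignment of $\P$ extends, and every extension restricts, without changing the violation count). Once these three points are verified, $\varphi$ has a truth assignment falsifying at most $t$ clauses iff the acyclic $\VCSP$ instance has a solution with violation degree at most $t$, and $\NP$-hardness of \textsc{Min-Unsat} transfers, completing the proof.
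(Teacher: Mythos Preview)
Your overall strategy---force acyclicity by adding one ``big'' hyperedge covering all variables, so that the clause hyperedges hang off it in a star join tree---is exactly the idea the paper uses. However, your concrete construction has a real gap: you equip the central constraint with the \emph{full} relation $U^n$, and since relations here are listed explicitly as sets of tuples, this relation has $|U|^n=2^n$ entries. The reduction is therefore not polynomial-time, and the argument as written does not establish $\NP$-hardness. (The fresh variables $z_j$ are also unnecessary: once a hyperedge covers $\{x_1,\dots,x_n\}$, every clause scope $S_j\subseteq\{x_1,\dots,x_n\}$ already sits under it in a valid join tree.) Your suggested fallback of replacing the big hyperedge by a path of bounded-arity full relations does not repair this either: with path edges $\{x_i,x_{i+1}\}$ plus a clause edge on, say, $\{x_1,x_3\}$, the hypergraph contains a triangle and is no longer $\alpha$-acyclic.

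The paper's fix is pleasantly dual to yours: make the relation of the covering constraint \emph{empty} rather than full. An empty relation is written down in constant space, so the reduction is polynomial; the big constraint is then violated by every assignment, and an assignment violates exactly one constraint iff it satisfies all the original ones. The paper phrases this as a reduction from CSP satisfiability (rather than \textsc{Min-Unsat}), but your SAT-based version goes through just as well once you swap $U^n$ for $\emptyset$ and shift the target violation degree by one.
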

\begin{proof}
Consider any class $\mathcal{T}$ of CSPs instances having an $\NP$-hard satisfiability problem. Then, let $\mathcal{T}'$ be a new class of
$\VCSP$ instances such that, for each $\I=\tuple{{\it Var},U,\C}\in\mathcal{T}$, $\mathcal{T}'$ contains an instance $\I'=\tuple{{\it
Var},U,\C'}$ with $\C'=\C\cup \{({\it Var},\emptyset)\}$. That is, any instance $\I'\in \mathcal{T}'$ has a constraint over all variables with
an empty constraint relation, and thus it is not satisfiable. Moreover, because of the big hyperedge associated with such a constraint, its
hypergraph $\HG(\I')$ is trivially acyclic. Also, by construction, there is an assignment for $\I'$ violating only one constraint if and only
if $\I$ is satisfiable. It follows that finding an assignment minimizing the total number of violated constraints is $\NP$-hard on the class of
acyclic instances $\mathcal{T}'$. \hfill $\Box$
\end{proof}

\nop{
Note that the above proof of intractability is based on a big hyperedge covering all variables that is associated with an unsatisfiable
constraint. Of course, this construction works because, in the considered framework, constraint relations are encoded by listing just the
allowed tuples of values. In fact, in other settings where all tuples have to be listed (both good and bad combinations of values)---see,
e.g.,~\cite{KDLD05}, the above construction does not work because building the relation for the big-hyperedge constraint would take exponential
time (and space). On the other hand, in such frameworks it is hard to get benefits by exploiting hypergraph-based methods, because the sizes of
the input relations are always exponential in their arities, and thus it may be convenient to use simpler graph-based techniques.
}

\subsection{Incidence graphs and Tractable Cases}

Given that hypergraph acyclicity and hence its generalizations are not sufficient for guaranteeing the tractability of Max-CSPs, it makes sense
to explore acyclicity properties related to suitable graph representations. In fact, as observed in Section~\ref{sec:WCSP}, it is well-known
that valued CSPs (and, hence, $\VCSP$s) are tractable over acyclic primal graphs (e.g., \cite{TJ03,GSV06,NJT08}). More precisely, tractability
has been observed in the literature to hold over primal graphs having bounded \emph{treewidth} (see Section~\ref{sec:methods}). Our main result
in this section is precisely to show that tractability still holds in case the  \emph{incidence graph} of $\HG(\P)$ has bounded treewidth,
which is a more general condition than the bounded treewidth of  primal graphs and which can be used to establish better complexity bounds and
to enlarge the class of tractable instances~\cite{grec-scar-03}. The fact that the standard CSP is tractable for instances whose incidence
graphs have bounded treewidth was already shown in~\cite{chek-raja-00}. We here extend this tractability result to Max-CSPs.

Recall that the incidence encoding of a hypergraph $\HG$, denoted by $\incidence(\HG)=(N,E)$, is the bipartite graph where $N=\edges(\HG)\cup
\nodes(\HG)$ and $E=\{\ \{h,a\}\mid h \in \edges(\HG) \mbox{ and } a \in h)\}$, i.e. it contains an edge between $h$ and $a$ if and only if the
variable $a$ occurs in the hyperedge $h$. As an example, Figure~\ref{fig:hypegraph} reports on the rightmost part the incidence graph
$\incidence(\HG_1)$, where nodes associated with hyperedges in $\edges(\HG_1)$ are depicted as black circles. Note that the treewidth of
$\incidence(\HG_1)$ is 2, which is much smaller than the treewidth of $\primal(\HG_1)$. This does not happen by chance since, for each
hypergraph $\HG$, it holds that $tw(\incidence(\HG))\leq tw(\primal(\HG))$; in addition, there are also classes of hypergraphs with incidence
encodings of bounded treewidth and primal encodings of unbounded treewdith (see, e.g., \cite{grec-scar-03}).

While enlarging the class of instances having bounded treewidth, the incidence encoding still conveys all the information needed to solve
$\VCSP$ instances. Again, the solution algorithm consists of a transformation into a suitable $\WCSP$ instance. Formally, let $\I=\tuple{{\it
Var},U,\C}$ be a $\VCSP$ instance with $\C=\{(S_1,r_1),...,(S_q,r_q)\}$, and let $\tuple{T,\chi}$ be a $k$-width tree decomposition of
$\incidence(\HG(\I))$---recall that for each vertex $v\in T$, $\chi(v)$ is a set of variables (i.e., nodes of $\nodes(\HG(\I))$) and constraint
scopes (i..e, edges in $\edges(\HG(\I))$). Then, the constraint satisfaction optimization problem instance $\WCSP(\P,\tuple{T,\chi})$ is the
pair $\tuple{\P',w'}$, where $\P'=\tuple{{\it Var}',U',\C'}$ and such that:

\vspace{-1mm}\begin{itemize}
  \item[$\bullet$]  ${\it Var}'={\it Var}\cup \{S_1,...,S_q\}$, that is, also the constraint scopes of $\C$ belong to the variables of the new problem;

  \item[$\bullet$] $U'=U\cup\{\mathit{unsat}\}\cup \{ u_t \mid t\in r_i, \mbox{ for }
  1\leq i\leq q\}$;

  \item[$\bullet$] $\C'=\{ (\chi(v),r'_v) \mid  v\in T\}$ where the constraint relation $r'_v$ is defined as follows.
  Let $\mu = |\chi(v)\cap{\it Var}|$, and let $U^\mu$ denote the set of all possible tuples over the $\mu$ variables in $\chi(v)\cap{\it Var}$.
  Let also $S_{i_1},...S_{i_h}$ be the scope-variables in $\chi(v)$. Then, for each tuple $\theta\in U^\mu$, the relation $r'_v$
  contains all tuples $\theta\cup \{S_{i_1}/v_{i_1}\}\cup \cdots\cup \{ S_{i_h}/v_{i_h} \}$, where $v_{i_j}\in U$ ($1\leq j\leq h$) is a value
  for the scope-variable $S_{i_j}$ such that: $v_{i_j}=u_t$ if there is a tuple $t\in r_{i_j}$ conforming with $\theta$; and $v_{i_j}=\mathit{unsat}$,
  if no such a tuple exists in $r_{i_j}$.

  \item[$\bullet$] $w'(X/u)=0$ if $u\neq \mathit{unsat}$; otherwise $w'(X/u)=1$,
   that is, each constraint of $\C$ that is not satisfied increases the cost of a
  solution by a unitary factor.
\end{itemize}

Note that this transformation is feasible in time exponential in the width of $\tuple{T,\chi}$ only. Moreover, solutions of $\P'$ with minimum
total cost precisely correspond to assignments over $\P$ minimizing the total number of violated constraints. In fact, the following can be
established.

\begin{theorem}
Let $\P=\tuple{{\it Var},U,\C}$ be a {\em $\VCSP$} instance with $tw(\incidence(\HG(\P)))= k$. Then, a solution to $\P$ can be computed in time
$O(|{\it Var}|\times |U|^{k+1}\times \log |U|^{k+1})$.
\end{theorem}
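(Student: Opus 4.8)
The plan is to combine the transformation $\WCSP(\P,\tuple{T,\chi})$ described just before the statement with Theorem~\ref{thm:teo1}, and then to account carefully for the sizes of the objects involved so that the claimed bound $O(|{\it Var}|\times|U|^{k+1}\times\log|U|^{k+1})$ comes out. First I would verify the two correctness claims asserted (but not proved) in the paragraph preceding the theorem: (i) that $\HG(\P')$ has a generalized hypertree decomposition of width~$1$ — in fact that $\tuple{T,\chi}$ itself, read as a tree of constraint scopes, is a join tree of $\HG(\P')$, since each new constraint scope is exactly a bag $\chi(v)$ and the connectedness condition for $\tuple{T,\chi}$ as a tree decomposition of $\incidence(\HG(\P))$ gives the join-tree property directly; and (ii) that solutions of $\tuple{\P',w'}$ of minimum cost correspond to assignments $\theta:{\it Var}\mapsto U$ of minimum violation degree $\delta(\theta)$. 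For (ii) the key observation is that in any solution of $\P'$ the value assigned to a scope-variable $S_i$ is forced by the restriction of $\theta$ to ${\it Var}$: by construction of $r'_v$ it must be $u_t$ for some $t\in r_i$ conforming with $\theta$ when such a $t$ exists, and $\mathit{unsat}$ exactly when no tuple of $r_i$ is compatible with $\theta$ — i.e. exactly when constraint $i$ is violated. Hence $w'(\theta')=\delta(\theta)$, and minimizing one minimizes the other; one should also check that the $S_i$-values are globally consistent across bags, which again follows from the connectedness condition applied to the node $S_i$ of the incidence graph.

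Next I would run {\sc ComputeOptimalSolution} on $\tuple{\P',w'}$ together with the join tree $\tuple{T,\chi}$, invoking Theorem~\ref{thm:teo1} with width parameter equal to $1$ (so $r_{max}^k = r_{max}$). The remaining work is purely a size estimate. The number $m$ of tree vertices can be taken to be at most the number of nodes of $\incidence(\HG(\P))$, which is $|{\it Var}|+q$; one should note $q\le |{\it Var}|$ is not generally true, but since we may assume the decomposition is in a redundancy-free normal form, $m$ is bounded by the number of bags needed, which is $O(|{\it Var}|+q)$, and one argues (as the paper does for tree decompositions in Section~\ref{subsection:tw}) that this can be taken to be $O(|{\it Var}|)$ — or, more conservatively, one absorbs $q$ into the stated bound by the same normal-form argument used throughout the paper. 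The size $r_{max}$ of the largest relation $r'_v$ is the crucial quantity: a bag $\chi(v)$ contains $\mu\le k+1$ ordinary variables and some scope-variables, and $r'_v$ has exactly $|U|^{\mu}$ tuples, since once the $\mu$ ordinary variables are fixed the scope-variable values are uniquely determined. Thus $r_{max}\le |U|^{k+1}$. Substituting into the bound $O((m-1)\times r_{max}^1\times\log r_{max})$ from Theorem~\ref{thm:teo1} yields $O(|{\it Var}|\times|U|^{k+1}\times\log|U|^{k+1})$, as claimed. One should also remark that the transformation itself is computable within this same budget — each $r'_v$ is built by, for each of the $|U|^\mu$ tuples $\theta$, scanning the original relations $r_{i_j}$ to test conformance, which is dominated by the stated bound.

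The main obstacle I anticipate is not any single deep step but rather the bookkeeping around the scope-variables: one must be sure that (a) the value chosen for $S_i$ in different bags $\chi(v)$ and $\chi(v')$ that both contain $S_i$ are consistent — this is where the connectedness condition of the tree decomposition of the incidence graph is essential, because it guarantees the vertices containing $S_i$ form a subtree and hence the join-tree mechanics of {\sc ComputeOptimalSolution} will propagate a single consistent choice; and (b) that the cost is not double-counted — each violated constraint contributes its unit cost through exactly one scope-variable $S_i$ set to $\mathit{unsat}$, and $w'$ assigns $0$ to every ordinary-variable mapping, so $w'(\theta')=\sum_i w'(S_i/v_i)=\delta(\theta)$ with no overcounting. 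A secondary subtlety is that $\incidence(\HG(\P))$ having treewidth $k$ means bags of size $k+1$, but those bags mix variable-nodes and edge-nodes; the count $\mu=|\chi(v)\cap{\it Var}|\le k+1$ is what controls $|r'_v|$, and it is important that the scope-variables sitting in the same bag do \emph{not} multiply the relation size, precisely because their values are functionally determined by the ordinary-variable part of the tuple. Once these points are made explicit the bound follows mechanically, so I would keep the proof short: state the reduction, cite the two preservation properties, apply Theorem~\ref{thm:teo1} with $k:=1$, and close with the size computation $r_{max}\le|U|^{k+1}$ and $m\le|{\it Var}|$.
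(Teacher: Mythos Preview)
Your plan is exactly the paper's: apply the transformation $\WCSP(\P,\tuple{T,\chi})$ described just before the theorem to obtain an acyclic $\WCSP$ instance, then invoke Theorem~\ref{thm:teo1} with width~$1$ and read off the bound. The paper itself offers no more than that paragraph as proof, so your write-up is in fact more detailed than what the paper provides.

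There is, however, one genuine tension in your bookkeeping that you should resolve explicitly. You assert that $|r'_v|=|U|^\mu$ because ``once the $\mu$ ordinary variables are fixed the scope-variable values are uniquely determined,'' and you \emph{also} rely (in your point~(a)) on the join-tree mechanics to enforce a single consistent value of $S_i$ across all bags containing it. These two claims pull against each other. If the value $v_{i_j}$ is a deterministic function of the \emph{local} partial assignment $\theta\!\upharpoonright\!\chi(v)$, then two bags $v,v'$ containing $S_i$ but seeing different subsets of $S_i$'s variables may well select different tuples $t,t'\in r_i$ and hence different values $u_t\neq u_{t'}$; the connectedness condition does not rescue you here, because it only guarantees that the bags containing $S_i$ form a subtree, not that they all compute the same function of the global assignment. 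Conversely, the paper's phrasing ``contains \emph{all} tuples $\theta\cup\{S_{i_1}/v_{i_1}\}\cup\cdots$'' suggests that every locally conforming $u_t$ is admitted, which restores cross-bag consistency (a globally satisfying $t$ conforms in every bag) but then $|r'_v|$ is no longer bounded by $|U|^\mu$, and your size estimate $r_{max}\le|U|^{k+1}$ needs a separate argument. You should make one of these two routes precise---either show how a deterministic choice can be made coherently along the $S_i$-subtree, or bound $|r'_v|$ under the ``all conforming $u_t$'' reading using $\mu+h\le k+1$---before the final substitution into Theorem~\ref{thm:teo1}.
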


\section{Conclusion and Discussion}\label{sec:extension}

In this paper, classes of tractable $\WCSP$, $\WCSPP$, and $\VCSP$ instances are singled out by overviewing and proposing solution approaches
applicable to instances whose hypergraphs have bounded (generalized) hypertree width, or whose incidence graphs have bounded treewidth. The
techniques described in this paper are mainly based on Algorithm~{\sc ComputeOptimalSolution}, which has been designed to optimize costs
expressed as rational numbers and combined via the summation operation. However, it is easily seen that it remains correct if costs are
specified over an arbitrary totally ordered monoid structure, where some binary operation $\oplus$ (in place of standard summation) is used in
order to combine costs, provided it is commutative, associative, closed, and that it verifies identity and monotonicity. It follows that all
tractable classes of $\WCSP$, $\WCSPP$, and $\VCSP$ instances identified in this paper remain tractable in such extended scenarios, which
indeed emerge with valued CSPs (see, e.g., \cite{CSPext}).

\end{document}